\documentclass[11pt]{article}

\usepackage[margin=1in]{geometry}
\usepackage{amsmath}
\usepackage{amssymb}
\usepackage{amsthm}
\usepackage{graphicx}
\usepackage{booktabs}
\usepackage{hyperref}
\usepackage{natbib}
\usepackage{authblk}

\newtheorem{theorem}{Theorem}
\newtheorem{proposition}{Proposition}
\newtheorem{definition}{Definition}
\newtheorem{property}{Property}
\newtheorem{corollary}{Corollary}
\newtheorem{remark}{Remark}

\title{FRACTAL: A Self-Similar Multi-Agent Orchestration Architecture\\
with Formal Communication, Convergence, and Depth Bounds}
\author{Ohad Dan \and Abhinav Taliyan}
\date{}

\begin{document}
\maketitle

\begin{abstract}
We present FRACTAL, a self-similar multi-agent architecture for recursive goal decomposition, in which the same five-component organizational pattern --- Orchestrator, Ensemble, Knowledge Ledger, Decision Maker, Meta-Observer --- recurs at every level of a decomposition tree. We make four formal contributions: (1) a communication bound showing intra-node cost is linear in ensemble size and round count ($O(kr)$) under a two-phase blind-proposal/stress-test protocol, versus quadratic cost for free-form debate, with an explicit, stated assumption on fixed evaluator count; (2) a two-mode orchestration criterion separating generative and evaluative phases via a measurable diversity-saturation threshold; (3) a supermajority convergence guarantee with explicit probability bounds at stated agreement levels, and a formal depth bound under an operationalized goal-complexity measure; (4) a Bayesian inter-layer escalation mechanism unifying hysteresis and contract-breach signaling under one posterior update. We illustrate the architecture's execution qualitatively on a worked example and a multi-layer case study; we do not report quantitative performance claims in this paper, and we explicitly discuss the formal and inductive-bias limitations of the model, including an unimplemented but architecture-compatible direction toward bounded self-improvement. We use ``fractal'' as an architectural metaphor for self-similarity, not a measured mathematical property. Empirical validation of the architecture's comparative performance is left to future work.
\end{abstract}

\noindent\textbf{Keywords:} Multi-agent orchestration; recursive decomposition; communication complexity; convergence guarantees; hierarchical planning; formal methods for agent architectures

\section{Introduction}

Consider a startup founder who asks an AI system: ``Design a go-to-market strategy for my AI code review tool.'' A single LLM produces a competent but generic plan. A multi-agent debate yields richer perspectives --- but at quadratic communication cost, with agents that converge prematurely on shared assumptions, and no principled way to decompose the macro goal (strategy) into sub-goals (pricing, positioning, risk mitigation) and sub-sub-goals (specific channel tactics). What if the same organizational structure that produced the strategy --- a small team of diverse experts, debating, then deciding, then delegating --- could recursively spawn itself at each level of detail?

This intuition motivates FRACTAL (Fractal Recursive Agents Coordinating Through Autonomous Layers). We use ``fractal'' here as an architectural metaphor for the recursive self-similarity of the node structure (Definition~1) --- the same five-component pattern recurring at every level of the decomposition tree --- not as a claim of measured fractal dimension or scale-invariant statistics on any real system's output.

LLM-based multi-agent systems have demonstrated improvements over single-agent inference on reasoning \citep{du2024improving}, code generation \citep{hong2024metagpt}, and planning \citep{yao2023tree}. But as these systems grow, three challenges emerge:

\begin{enumerate}
\item \textbf{Communication cost.} Free-form debate among $k$ agents over $r$ rounds requires $O(k^2 r)$ messages (each agent responds to every other). This quadratic scaling limits ensemble sizes and incentivizes shallow interaction.
\item \textbf{Premature convergence.} Shared training data creates ``synthetic diversity'' that collapses under mutual evaluation \citep{liang2024encouraging}. Human creativity research is consistent with this: continuous cross-evaluation during ideation degrades output \citep{bernstein2018intermittent}.
\item \textbf{Unbounded recursion.} Hierarchical decomposition can proceed indefinitely. The options framework \citep{sutton1999between} provides temporal abstraction but is empirically limited to two layers in practice \citep{eppe2022intelligent}. No existing LLM multi-agent framework provides formal depth bounds.
\end{enumerate}

FRACTAL addresses all three through self-similarity: every node in the decomposition tree --- whether handling the macro-strategic question (``what market to enter'') or the micro-tactical one (``which pricing page A/B test to run'') --- instantiates the same five-component structure and obeys the same communication protocol. Bounds on communication, convergence, and depth are derived once and hold at every layer.

\paragraph{Running example.} Throughout this paper, we trace FRACTAL's operation on the startup strategy task above, as an illustrative walkthrough of protocol execution rather than a quantitative evaluation. Section~5 shows how Divergent mode generates four qualitatively different strategy angles (product-led growth, accuracy-first positioning, risk-bounded design, and a complete reframing toward ``technical debt quantification''); Section~6 shows how the Convergent critique phase identifies that the reframing is a feature rather than a standalone product --- an insight absent from every individual proposal.

\paragraph{Contributions.} We formalize FRACTAL as a rooted tree of identical nodes (Section~3), derive communication bounds (Section~4), introduce two-mode orchestration with stability guarantees (Section~5), present convergence and depth bounds (Section~6), present Bayesian inter-layer escalation (Section~7), and characterize the formal bounds a future bounded self-improvement extension would need to respect (Section~8). We illustrate the formalism qualitatively (Sections~5, 6, and Appendix~A); empirical validation of the architecture's comparative performance against single-agent or flat-debate baselines is left to future work (Section~9).

\section{Related Work}

\paragraph{Multi-agent debate and collaboration.} Multi-Agent Debate (MAD) \citep{liang2024encouraging} addresses the degeneration-of-thought problem through adversarial discussion but operates in a flat topology without hierarchy or communication bounds. \citet{du2024improving} show that debate improves factuality; DyLAN \citep{liu2024dylan} introduces dynamic agent selection with an unsupervised importance-scoring metric and a BLEU-based consistency check used for early-stopping. MetaGPT \citep{hong2024metagpt} achieves strong results via sequential SOPs with structured artifacts, demonstrating that coordination structure matters more than agent count. AutoGen \citep{wu2023autogen} provides configurable conversation topologies. ChatDev \citep{qian2024chatdev} and CAMEL \citep{li2023camel} explore role-playing collaboration. None provides formal communication bounds, convergence guarantees, or recursive self-similarity.

\paragraph{Hierarchical planning and reinforcement learning.} The options framework \citep{sutton1999between}, MAXQ \citep{dietterich2000maxq}, and feudal architectures provide temporal abstraction but are limited to two layers in practice \citep{eppe2022intelligent}. Tree of Thoughts \citep{yao2023tree} enables hierarchical search but uses self-evaluation rather than multi-agent deliberation. FRACTAL replaces learned option policies with LLM-generated contracts, aiming at arbitrary depth through formal depth bounds rather than empirical scaling.

\paragraph{Self-improving AI systems.} Self-Refine \citep{madaan2023selfrefine} achieves inference-time improvement through iterative feedback; Reflexion \citep{shinn2023reflexion} stores verbal reinforcement in episodic memory; ADAS \citep{hu2024adas} searches over code-defined agent architectures; EvoAgent \citep{yuan2025evoagent} evolves multi-agent ensembles; Voyager \citep{wang2023voyager} builds skill libraries for lifelong learning; FunSearch \citep{romeraparedes2024mathematical} pairs LLMs with evaluators to search program space. The Agentic Neural Network framework \citep{ma2025selfevolving} conceptualizes multi-agent systems as neural network layers with textual backpropagation. Section~8 discusses what formal bounds (contracts, not structure) a FRACTAL extension in this space would need to respect --- not a currently implemented self-modification mechanism.

\paragraph{Cognitive foundations.} The divergent--convergent mode separation draws loosely on the neuroscience of creativity, where the Default Mode Network (idea generation) and Executive Control Network (evaluation) are often described as operating in anti-correlated phases. \citet{bernstein2018intermittent} demonstrate that intermittent breaks in human group interaction improve collective intelligence --- the inspiration for our minimum dwell-time constraint (Proposition~10).

\paragraph{Practitioner frameworks.} Engineering-oriented systems (LangGraph, CrewAI, LlamaIndex Workflows) provide flexible agent orchestration with rich tool ecosystems but without formal guarantees on communication cost, convergence, or recursion depth. FRACTAL contributes principled bounds and theoretical analysis rather than competing on ease-of-use or ecosystem maturity; the formal properties could inform design decisions within these frameworks.

\section{The Fractal Node}

Informally, a Fractal Node is a small team of diverse AI experts who brainstorm freely (Divergent mode), then critically evaluate proposals (Convergent mode), then decide on a path forward and delegate sub-goals to child teams --- each structurally identical. Communication between levels flows only through compressed ``contracts,'' never raw conversation (Figure~1). We now formalize this structure.

\begin{definition}[Fractal Node]
A Fractal Node at depth $d$ is the 5-tuple $N_d = (O_d, E_d, L_d, D_d, M_d)$:
\begin{itemize}
\item $O_d$: Orchestrator --- a finite automaton with states $Q_O = \{q_{div}, q_{conv}\}$ and transition function $O_d: (Q_O \times \Sigma) \to (Q_O \times \{\mathrm{route}(e_i), \mathrm{terminate}, \mathrm{query}, \mathrm{switch}\})$
\item $E_d = \{e_1, \ldots, e_k\}$: Ensemble of $k \in [3,5]$ persona agents, each $e_i = (p_i, \beta_i, T_i, \phi_i)$ with domain prompt $p_i$, cognitive bias vector $\beta_i \in \mathbb{R}^{|B|}$, tool access $T_i$, and output schema $\phi_i$
\item $L_d = (F_d, C_d, R_d, D_d)$: Knowledge Ledger of verified facts, constraints, risks, and discarded alternatives
\item $D_d: (L_d, S_d) \to (a_d^*, \{X_{d+1}^1, \ldots, X_{d+1}^n\})$: Decision Maker
\item $M_d: \mathrm{Trace}(N_d) \to R_{audit}$: Meta-Observer (\textbf{read-only})
\end{itemize}
\end{definition}

\begin{definition}[FRACTAL System]
A FRACTAL system is a rooted tree $T = (V, E)$ where $V = \bigcup_{d=0}^{d_{max}} V_d$, each $v \in V$ is a Fractal Node, and each edge $(v_d^i, v_{d+1}^j) \in E$ carries an immutable Execution Contract $X_{d \to d+1}^{i \to j}$.
\end{definition}

\paragraph{Self-Similarity Invariant.} The structure of every $N_d$ is identical regardless of $d$. Only the parameters --- personas, domain prompts, clock speed, tools --- vary.

\begin{remark}[Why self-similarity?]
Self-similarity is not laziness --- it is the mechanism that enables formal analysis. Because every node obeys the same communication protocol, Theorem~6 (token bound) applies uniformly at all depths without per-layer derivation. Because every node uses the same stopping condition, Theorem~12 (convergence) holds at every layer. This composability is lost if nodes at different depths have different internal structures. The tradeoff: self-similarity imposes a minimum overhead even for simple sub-goals. Depth-adaptive node sizing (varying $k$ with task complexity) is a natural extension we leave to future work.
\end{remark}

\begin{definition}[Cognitive Diversity Constraint]
For ensemble $E_d = \{e_1, \ldots, e_k\}$, define angular distance $\mathrm{div}(e_i, e_j) = 1 - \langle \beta_i, \beta_j \rangle / (\|\beta_i\| \|\beta_j\|)$. The ensemble satisfies diversity threshold $\delta$ iff $\min_{i \neq j} \mathrm{div}(e_i, e_j) \geq \delta$. We target $\delta \approx 0.4$ as a design choice of this architecture, not derived from prior empirical work.
\end{definition}

\emph{(On attribution: an earlier draft of this definition cited DyLAN's \citep{liu2024dylan} ``finding that moderate contentiousness optimizes performance'' as the basis for $\delta \approx 0.4$. We independently checked DyLAN's full text and found no mention of cosine similarity, bias vectors, or an angular diversity threshold anywhere in that paper; DyLAN's only numeric threshold is a BLEU score of 0.9 used for an early-stopping consistency check across rounds, unrelated to ensemble diversity at team-formation time. We have removed that citation from this definition accordingly --- $\delta \approx 0.4$ is this architecture's own design choice, not a value derived from DyLAN or any other prior work we are aware of.)}

\begin{definition}[Execution Contract]
An Execution Contract is $X = (g, \Theta, \tau_{exp}, G, T_{child}, P_{esc}, F_{cls})$ where $g$ is the sub-goal, $\Theta$ quantitative bounds, $\tau_{exp}$ temporal expiration, $G$ guardrails, $T_{child}$ tool manifest, $P_{esc}$ escalation triggers, and $F_{cls}$ falsification clauses specifying observable conditions that would invalidate the contract's premises.
\end{definition}

\paragraph{Information Inheritance.} A child node receives only its contract --- never the parent's raw deliberation history. Communication is one-way (parent$\to$child) and one-time (at spawn). Upward communication is zero except on CONTRACT\_BREACH.

\begin{figure}[t]
\centering
\includegraphics[width=0.85\textwidth]{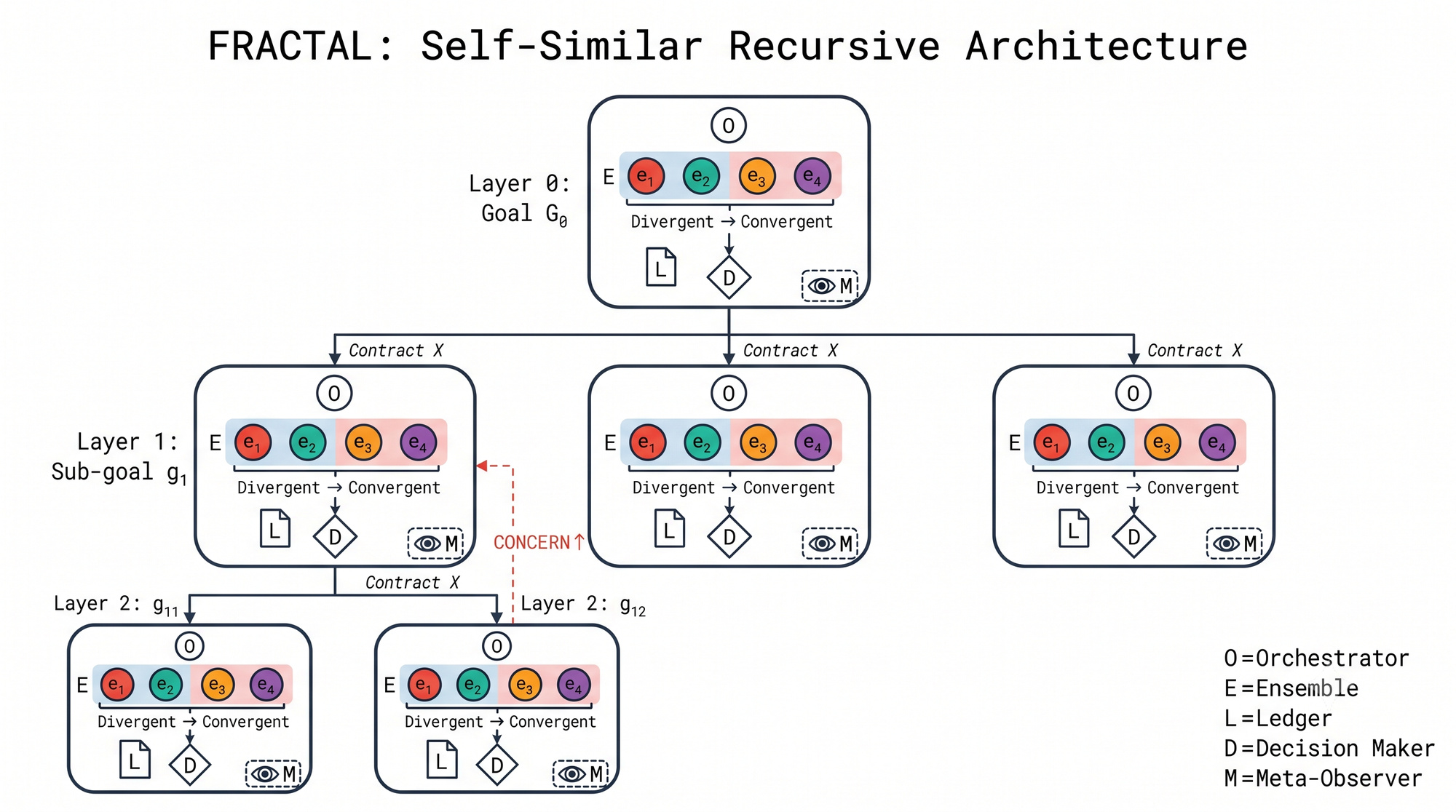}
\caption{FRACTAL's formal node structure (Definition~1), shown schematically --- a diagram of the model, not a measured system. Self-similar recursive tree where every node (regardless of depth) instantiates the same 5-component structure: Orchestrator (O), Ensemble (E), Knowledge Ledger (L), Decision Maker (D), Meta-Observer (M). Downward arrows carry immutable Execution Contracts; upward dashed arrows indicate CONCERN escalation. Sibling isolation (no lateral communication) is enforced by construction.}
\end{figure}

\section{Communication Complexity}

\begin{theorem}[Two-Phase Communication Bound]
Under the two-phase protocol (Blind Proposal $\to$ Stress Test), run for $r$ divergent rounds before a single convergent/synthesis step, intra-node token cost is:
\begin{equation}
C_{2P}(N_d, r) = r \cdot k \cdot T_p + m \cdot c \cdot T_s + T_{synth}
\end{equation}
where $T_p$ is tokens per proposal, $m = 2$ is the number of evaluator personas (held fixed, independent of ensemble size $k$ --- see the assumption discussion below), $c = 3$ critiques per proposal, $T_s$ tokens per critique, and $T_{synth}$ the Decision Maker synthesis. Under free-form debate with $r$ rounds:
\begin{equation}
C_{debate}(N_d, r) = k \cdot r \cdot T_{turn} + \binom{k}{2} \cdot r \cdot T_{resp}
\end{equation}
Equation~1 is linear in $k$ and $r$ (genuinely $O(kr)$); Equation~2 is quadratic in $k$ ($O(k^2 r)$).
\end{theorem}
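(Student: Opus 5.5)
The argument is a direct counting of tokens under each protocol's message schedule, followed by reading off the asymptotic order with $T_p, T_s, T_{resp}, T_{turn}, T_{synth}$ treated as constants and $m, c$ fixed by assumption. For the two-phase protocol I would split the cost into three disjoint phases and count messages in each. First, in each of the $r$ Blind Proposal rounds, every one of the $k$ ensemble members emits exactly one proposal. Blindness means no agent sees another's proposal within the round, so no response messages are generated and the round costs $k T_p$. Summing over rounds gives $rkT_p$. Second, in the Stress Test phase, the $m$ designated evaluator personas each produce $c$ critiques of length $T_s$, giving $mcT_s$. Third, the Decision Maker $D_d$ emits one synthesis of cost $T_{synth}$. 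Adding the three terms yields Equation~1, and the phases are disjoint by construction of the protocol, so nothing is double counted.

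For free-form debate, I would count per round. Each of the $k$ agents takes one turn, costing $kT_{turn}$. In addition, each unordered pair $\{e_i,e_j\}$ exchanges one response, costing $\binom{k}{2}T_{resp}$. The schedule ``each agent responds to every other'' is modelled pairwise, as stated in the introduction; counting ordered pairs would only change the constant factor by 2. Multiplying by $r$ gives Equation~2.

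The asymptotic claims then follow. With $m=2$ and $c=3$ fixed, the second and third terms of Equation~1 are $O(1)$ in $(k,r)$, so $C_{2P}=\Theta(kr)$ whenever $T_p>0$. In Equation~2, $\binom{k}{2}=k(k-1)/2$, so the second term is $\Theta(k^2 r)$ when $T_{resp}>0$, and it dominates the linear turn term. The cleanest way to make this rigorous is to exhibit the constants explicitly. For the upper bound, $C_{2P}\le (T_p + mcT_s + T_{synth})\,kr$ for all $k,r\ge1$. For the matching lower bound, $C_{debate}\ge \tfrac{1}{4}T_{resp}k^2 r$ for $k\ge2$.

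The main obstacle is the modelling assumption rather than the arithmetic. The bound depends on the evaluator count $m$ and the critique count $c$ not scaling with $k$. If every proposal must receive $c$ critiques, then there are at least $kc$ critiques in total (or $rkc$ if every round is critiqued), and the stress-test term becomes $\Theta(k)$, which keeps the total $O(kr)$ anyway. So I would first try to show robustness. Under either reading, whether $mcT_s$ is constant or $ckT_s$, the two-phase cost stays linear in $k$ and $r$, because critiques are directed at proposals rather than exchanged pairwise among agents. Only an all-pairs critique rule would reintroduce the $k^2$ term. I would state this explicitly as the assumption that the theorem's linearity rests on.
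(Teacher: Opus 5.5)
Your proof is correct and follows the paper's own argument. Both count tokens phase by phase: $rkT_p$ for proposals, a fixed $mcT_s$ stress-test term and one $T_{synth}$, against $kT_{turn}+\binom{k}{2}T_{resp}$ per debate round; your explicit constants $C_{2P}\le (T_p+mcT_s+T_{synth})kr$ and $C_{debate}\ge \tfrac14 T_{resp}k^2r$ just make the asymptotic reading rigorous.

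Your final paragraph is more careful than the paper, whose proof says the evaluators produce ``$c$ critiques per proposal'' yet writes $mcT_s$. Under that per-proposal reading the exact term is $mckT_s$ (or $mcrkT_s$), so Equation~1 holds verbatim only under your per-evaluator reading, but, as you note, the $O(kr)$ conclusion survives either way.
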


\noindent\emph{Note on the fixed-evaluator assumption.} The evaluator count $m$ is held fixed at 2 regardless of ensemble size $k$ --- this is a stated architectural choice (Mode 2, Section~5, names exactly two evaluator personas: Structural Purist and Friction Engine), not a free parameter that happens to scale favorably. If evaluators scaled with $k$ instead (every one of $k$ agents evaluating every proposal), the Phase-2 term would be $k \cdot c \cdot T_s$ and, combined with $k$ proposers, the comparison to debate would depend on the relative constants rather than the clean $O(kr)$ vs.\ $O(k^2r)$ separation stated here. We make this assumption explicit rather than letting it stay implicit in the formula, per Theorem~6's role in Corollary~8 and the comparison in Figure~2.

\begin{proof}
The two-phase protocol eliminates the $\binom{k}{2}$ pairwise interaction term by routing all communication through the Orchestrator. Over $r$ divergent rounds, $k$ agents generate proposals independently each round ($r \cdot k \cdot T_p$ total). After the mode switch (Section~5), the $m = 2$ fixed evaluators each produce $c$ critiques per proposal once ($m \cdot c \cdot T_s$), followed by one synthesis step ($T_{synth}$). No agent-to-agent direct communication occurs. Free-form debate requires each pair of agents to exchange messages in each of $r$ rounds, yielding the quadratic term.
\end{proof}

\paragraph{Worked numeric example (illustrative, not measured).} To make the comparison concrete and checkable by hand, fix one self-consistent set of token constants: $T_p = T_{turn} = 500$, $T_s = T_{resp} = 300$, $T_{synth} = 500$, $c = 3$, $m = 2$, $r = 3$. Table~1 shows the resulting costs.

\begin{table}[h]
\centering
\begin{tabular}{@{}lllll@{}}
\toprule
$k$ & $C_{2P}$ (Eq.\ 1) & $C_{debate}$ (Eq.\ 2) & $C_{2P}/C_{debate}$ & Reduction \\
\midrule
3 & $3{,}500+1{,}800+500=6{,}800$ & $4{,}500+2{,}700=7{,}200$ & $0.944$ & $1.06\times$ \\
4 & $6{,}000+1{,}800+500=8{,}300$ & $6{,}000+5{,}400=11{,}400$ & $0.728$ & $1.37\times$ \\
5 & $7{,}500+1{,}800+500=9{,}800$ & $7{,}500+9{,}000=16{,}500$ & $0.594$ & $1.68\times$ \\
\bottomrule
\end{tabular}
\caption{Worked numeric example under one self-consistent, explicitly stated set of token constants (illustrative, not measured). FRACTAL is cheaper than debate across the entire stated range $k\in[3,5]$ under these constants.}
\end{table}

Under these constants and the fixed-$m$ assumption, FRACTAL is cheaper than free-form debate across the entire stated range $k \in [3,5]$, with the advantage widening as $k$ grows --- the expected qualitative behavior of a linear-vs-quadratic comparison. We do not claim these specific token values are measured; they are one illustrative, internally consistent choice, stated explicitly here (rather than left implicit, as in an earlier draft) so a reader can verify Figure~2 and the ratios above by hand with different constants of their own choosing. We also do not claim a fixed universal reduction factor (e.g., ``$15\times$'') independent of the chosen constants and $k$: the reduction ratio in Theorem~6 is $O(1/(kr))$ asymptotically, but its concrete numeric value at any specific $k$ depends on the token constants, as the table shows.

\begin{figure}[t]
\centering
\includegraphics[width=0.75\textwidth]{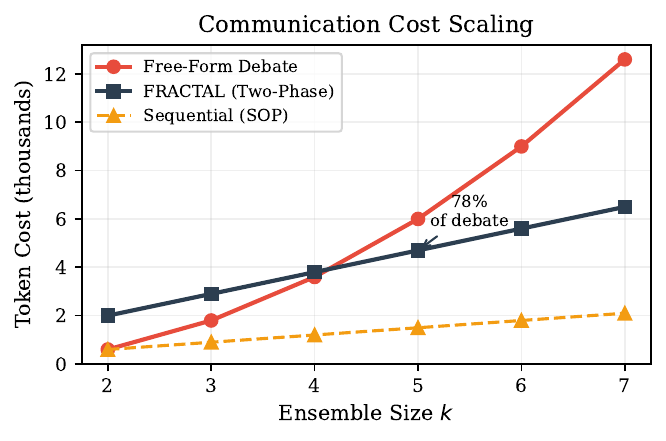}
\caption{Communication cost as predicted by Theorem~6's formula (Equation~1 vs.\ Equation~2) under the token constants stated in the worked numeric example ($T_p=T_{turn}=500$, $T_s=T_{resp}=300$, $T_{synth}=500$, $c=3$, $m=2$, $r=3$), not measured experimentally. FRACTAL is cheaper than free-form debate for all $k\in[3,5]$ under these constants; see Theorem~6 for the general $O(kr)$ vs.\ $O(k^2r)$ comparison.}
\end{figure}

\begin{property}[Communication Asymmetry]
By construction, in a well-formed FRACTAL system: $B_{down}(d \to d+1) = n_d \cdot |X|$ (once at spawn); $B_{up}(d+1 \to d) = 0$ in normal operation; $B_{lateral}(d,i \leftrightarrow d,j) = 0$ always (sibling isolation). Total system bandwidth is $O(|V| \cdot T_{contract})$ --- linear in the tree size.
\end{property}

\begin{corollary}[Total System Cost]
For uniform branching factor $n$ and depth $d_{max}$:
\begin{equation}
C_{total} = \sum_{d=0}^{d_{max}} n^d \cdot C_{2P}(N_d, r) + \sum_{d=0}^{d_{max}-1} n^{d+1} \cdot T_{contract} = O(n^{d_{max}} \cdot k \cdot r)
\end{equation}
Cost is exponential in depth but linear in ensemble size and round count --- justifying bounded depth (Theorem~14) and moderate ensemble sizes ($k \in [3,5]$).
\end{corollary}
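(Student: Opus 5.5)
The plan is to treat the tree as a disjoint union of node-local costs and edge-local costs, sum each level by level, and then bound the resulting geometric series. Everything is justified by Property (Communication Asymmetry) together with Theorem~6.

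\textbf{Decomposing the total cost.} By the Information Inheritance rule and Property (Communication Asymmetry), tokens are exchanged in only three ways:
\begin{itemize}
\item inside a node, where the cost is $C_{2P}(N_d,r)$ by Theorem~6;
\item downward along an edge, once at spawn, at cost $T_{contract}$;
\item upward or laterally, at cost zero in normal operation.
\end{itemize}
So $C_{total}$ is the sum of the node terms over $V$ plus the edge terms over $E$, with no cross terms.

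\textbf{Counting nodes and edges.} With uniform branching factor $n$, level $d$ contains $n^d$ nodes. Each node at depth $d\ge 1$ has exactly one incoming edge. Hence $n^{d+1}$ edges enter level $d+1$, for $d=0,\dots,d_{max}-1$. This gives the two sums in the displayed equation exactly.

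\textbf{Uniform node bound.} By the Self-Similarity Invariant, every node uses the same protocol with $k\in[3,5]$ and fixed $m=2$, $c=3$. Assuming the token constants $T_p,T_s,T_{synth}$ are bounded uniformly in $d$, Equation~1 gives $C_{2P}(N_d,r)\le r k T_p + 6T_s + T_{synth} = O(kr)$ with the same constant at every depth.

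\textbf{Bounding the series.} The node sum is therefore at most $O(kr)\sum_{d=0}^{d_{max}} n^d$. For $n\ge 2$,
\[
\sum_{d=0}^{d_{max}} n^d = \frac{n^{d_{max}+1}-1}{n-1} \le 2n^{d_{max}},
\]
so the node sum is $O(n^{d_{max}}kr)$. The edge sum is $T_{contract}\sum_{d=1}^{d_{max}} n^d = O(n^{d_{max}} T_{contract})$. Since $kr\ge 3$ and $T_{contract}$ is a per-contract constant, this is absorbed into $O(n^{d_{max}}kr)$. It is also consistent with Property's claim that bandwidth is $O(|V|\,T_{contract})$.

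\textbf{Main obstacle: what the $O(\cdot)$ hides.} The delicate step is being explicit about the asymptotics. The bound treats $T_p$, $T_s$, $T_{synth}$ and $T_{contract}$ as constants independent of $d$, $k$ and $r$. The geometric-series step needs $n\ge2$; the degenerate case $n=1$ gives $d_{max}+1$ nodes and must be stated separately as $O(d_{max}kr)$. I would state these assumptions up front. I would also note that "exponential in depth" means $n^{d_{max}}\approx|V|$, which is what motivates Theorem~14.
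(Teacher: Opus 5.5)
Your proposal is correct and takes the same route as the paper. The paper gives no separate proof: the displayed equation is itself the level-by-level sum of Theorem~6's per-node cost over $n^d$ nodes, plus $n^{d+1}$ contracts entering each level, collapsed to $O(n^{d_{max}} k r)$. Your explicit geometric-series bound, the assumption that token constants are uniform in depth, and the separate treatment of the degenerate case $n=1$ are sound refinements that the paper leaves unstated.
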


\section{Two-Mode Orchestration}

The ensemble operates in two architecturally separated modes, with mode-switching governed by a measurable saturation criterion rather than fixed round counts.

\paragraph{Mode 1 --- Divergent.} Generator personas (Radical Speculator, Integrative Synthesizer, Frame Challenger) are active; evaluator personas receive no input. Two sub-phases occur: Generation (independent blind proposals, Round~1) and Integration (respond to Frame Challenger's reframing, hybridize, Round~2). The Frame Challenger (Definition~11) is mandatory in this mode.

\emph{Illustrative walkthrough of protocol execution, not a quantitative evaluation.} In the startup strategy task, Divergent mode produces four proposals from personas with non-overlapping information: product-led growth targeting power users (Speculator, who received market sizing data), accuracy-first positioning with published benchmarks (Purist, who received competitive analysis), risk-bounded design addressing security concerns (Friction Engine, who received churn data), and a complete reframing toward ``technical debt quantification'' sold to VPs rather than developers (Frame Challenger, who received adjacent-market failure cases). The Frame Challenger forces the group to defend the ``developer tool'' frame, which ultimately produces a hybrid: the dollar-quantification idea is absorbed as a feature of the developer-facing product rather than adopted as a standalone pivot.

\paragraph{Mode 2 --- Convergent (Muted Democracy).} Evaluator personas (Structural Purist, Friction Engine) are active; generators are muted. Each evaluator produces exactly 3 critiques per proposal. The Decision Maker reviews the flat critique matrix and selects a path.

\emph{Illustrative walkthrough, continued.} Convergent critiques identify that the Frame Challenger's reframing changes the buyer from developers (fast adoption) to VPs (slow enterprise sales), contradicting the need for quick revenue. This critique transforms the reframing from a standalone direction into an embedded feature --- cost-of-escape scoring within a developer-facing product. This insight is absent from all four individual proposals in this walkthrough and emerges only from the adversarial evaluation step.

\begin{definition}[Mode-Switching Criterion]
The Orchestrator switches $q_{div} \to q_{conv}$ when:
\begin{equation}
\frac{\Delta\mathrm{Div}(r)}{\Delta\mathrm{Div}(r-1)} < \tau_{sat} \quad \text{and} \quad r \geq r_{dwell}
\end{equation}
where $\Delta\mathrm{Div}(r)$ is marginal diversity gain at round $r$ and $r_{dwell} = 2$ enforces minimum dwell time.
\end{definition}

\begin{proposition}[Dwell-Time Guarantee]
This is a conditional/parametric result: \emph{if} the system exhibits geometric diversity decay with rate $\gamma < 1$, \emph{then}, for $r_{dwell} \geq 2$, the expected round of mode switch satisfies $E[r^*] \geq r_{dwell}$, ensuring at least one Integration round following Generation. This paper does not estimate $\gamma$ empirically for any implementation; it is a modeling assumption, illustrated with an example value ($\gamma = 0.4$, Theorem~13) for concreteness, not a measured property of any run. The system terminates in finite rounds under the composite stopping condition $\mathrm{STOP} = \mathrm{SuperMajority}(E_d) \vee \mathrm{DR}(L_d, \epsilon) \vee (r \geq r_{max})$.
\end{proposition}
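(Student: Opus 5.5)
The plan has three parts: the dwell-time inequality, the Integration-round consequence, and finite termination. The first two come straight from the Mode-Switching Criterion (Definition~4). The decay hypothesis on $\gamma$ matters only for showing that the switch actually happens, and the termination claim rests on the $r_{max}$ disjunct of STOP.

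\textbf{Step 1: the lower bound holds surely, hence in expectation.} Let $r^*$ be the first round $r$ at which both conjuncts of Definition~4 hold. Any such $r$ satisfies $r \geq r_{dwell}$, so $r^* \geq r_{dwell}$ pointwise on every sample path where a switch occurs. Taking expectations gives $E[r^*] \geq r_{dwell}$ immediately. This needs no distributional assumption beyond $r^*$ being a well-defined random variable.

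\textbf{Step 2: at least one Integration round.} Combine $r^* \geq 2$ with the protocol of Section~5. Round~1 is Generation, and any round $r\ge 2$ still spent in $q_{div}$ is Integration. The switch is evaluated only after round $r^*$'s divergent output has been produced. Hence the first convergent round is $r^*+1 \ge 3$, and round~2 is an Integration round.

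\textbf{Step 3: the switch occurs and the node terminates.} Under geometric decay, $\Delta\mathrm{Div}(r) = \Delta\mathrm{Div}(1)\gamma^{r-1}$, so the ratio in Definition~4 equals $\gamma$ for every $r \ge 2$. If $\tau_{sat} > \gamma$, both conjuncts hold at $r = r_{dwell}$, giving $r^* = r_{dwell}$ exactly. For the example value $\gamma = 0.4$ this requires $\tau_{sat} > 0.4$.

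If instead $\tau_{sat} \le \gamma$, the saturation test may never fire. Termination is then secured independently by the composite condition STOP. Because $r_{max}$ is a finite constant, the disjunct $r \ge r_{max}$ forces termination by round $r_{max}$ regardless of the SuperMajority and DR terms. So the node halts in at most $r_{max}$ rounds, plus one convergent step, with probability one.

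\textbf{Main obstacle.} The obstacle is interpretive rather than technical. The statement's conditional on $\gamma$ does no work for the inequality $E[r^*]\ge r_{dwell}$ itself. The only place it matters is deciding whether the switch fires before $r_{max}$, which requires relating $\gamma$ to the unstated $\tau_{sat}$.

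I would handle this by splitting into the cases $\tau_{sat} > \gamma$ and $\tau_{sat} \le \gamma$ as above. I would also define $r^* := \min(r^*, r_{max})$ so that the expectation is always finite and well defined. In the noisy case, where the decay is only geometric in expectation, I would apply Step~1's pointwise argument, which is unaffected by noise.
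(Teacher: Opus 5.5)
Your argument is correct. The paper gives no proof of this proposition (its proof-sketch appendix covers only Theorems~6, 12 and~14). The only justification available is therefore the definitional one implicit in the $r\ge r_{dwell}$ conjunct of the Mode-Switching Criterion and the $r\ge r_{max}$ disjunct of STOP, and that is exactly what you use.

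Your write-up adds several things over the bare statement that are worth keeping:
\begin{itemize}
\item The \emph{pathwise} bound $r^*\ge r_{dwell}$ is what actually delivers the Integration-round conclusion. An inequality on $E[r^*]$ alone would not rule out individual runs switching after Round~1.
\item Your case split shows that the geometric-decay hypothesis is idle for the inequality.
\item It also shows that a switch is guaranteed only when $\tau_{sat}>\gamma$, and that in that case $r^*=r_{dwell}$ exactly, so the bound is tight.
\end{itemize}

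Two assumptions that the paper leaves unstated should be written into your proof. First, the truncation $\min(r^*,r_{max})$ preserves the bound only if $r_{max}\ge r_{dwell}$; alternatively, set $r^*=\infty$ on non-switching paths. Second, the Integration-round claim also requires that STOP cannot fire before the dwell window closes. DR needs three low-gain rounds, so it cannot fire early, but a SuperMajority checked after the blind Round~1 would end the node with no Integration round.

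Finally, relying on the $r_{max}$ disjunct rather than DR for termination is the right choice. Theorem~13's decay hypothesis is on the ledger gain $\Delta L$, not on the diversity gain $\Delta\mathrm{Div}$ to which the proposition's $\gamma$ refers.
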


\begin{figure}[t]
\centering
\includegraphics[width=0.75\textwidth]{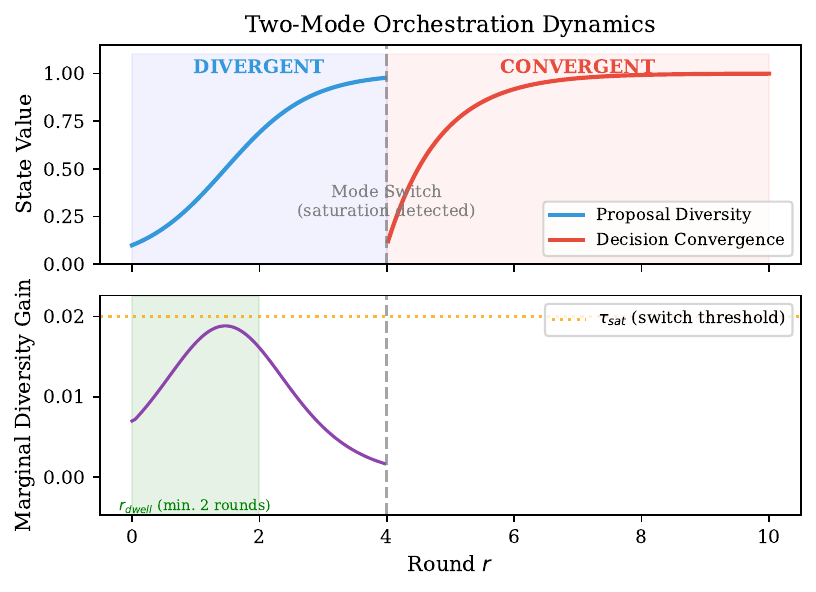}
\caption{Two-mode dynamics as predicted by the model in Definition~9 and Proposition~10 under illustrative parameter values, not measured from any run. Top: proposal diversity grows logistically in Divergent mode; convergence follows in Convergent mode. Bottom: mode switch triggered when marginal diversity gain drops below $\tau_{sat}$, subject to minimum dwell time $r_{dwell} \geq 2$.}
\end{figure}

\begin{definition}[Frame Challenger]
A designated persona in the Divergent phase whose role is to challenge the problem framing itself. Must produce: (1) a specific failure scenario for the current framing; (2) an alternative success mechanism; (3) a concrete modification proposal. Functions as a regularizer against premature convergence from shared-knowledge bias --- loosely analogous to dropout in neural networks, where ``dropping'' the assumed frame forces cross-frame robust solutions.
\end{definition}

\section{Convergence and Depth Bounds}

\begin{theorem}[Supermajority Convergence]
Let $p \in [0.5, 0.9]$ be the per-agent agreement probability (moderate contentiousness regime). For $k = 5$, the probability of supermajority ($\geq 4$ of 5) consensus within $r$ rounds is $P(\text{consensus in } r) = 1 - (1 - P_1)^r$ where $P_1 = \sum_{j=4}^{5} \binom{5}{j} p^j (1-p)^{5-j}$. At $p = 0.7$: $P_1 = \binom{5}{4}(0.7)^4(0.3) + (0.7)^5 = 0.36015 + 0.16807 = 0.528$, yielding $P(\text{consensus in } 3) = 1 - (0.472)^3 = 0.895$. At a higher illustrative agreement level $p = 0.85$: $P_1 = \binom{5}{4}(0.85)^4(0.15) + (0.85)^5 = 0.39151 + 0.44370 = 0.835$, yielding $P(\text{consensus in } 3) = 1 - (0.165)^3 = 0.996$.
\end{theorem}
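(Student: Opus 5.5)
The plan is to make the implicit probabilistic model explicit and then reduce the statement to a geometric-waiting-time computation. First I would state the modeling assumptions under which Theorem~12 holds. In each round, each of the $k=5$ ensemble agents independently "agrees" with the leading proposal with the same probability $p$. Outcomes in different rounds are independent and identically distributed, with $p$ held fixed across rounds. The stopping condition fires on the first round in which at least $4$ agents agree, matching the $\mathrm{SuperMajority}(E_d)$ disjunct of the composite condition in Proposition~10. These assumptions are not derivable from Definitions~1--11; they are the content of the "moderate contentiousness regime" hypothesis, and I would present them as such.

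Under these assumptions, the number of agreeing agents in a single round is $\mathrm{Bin}(5,p)$. The per-round success probability is therefore the upper tail $P_1=\sum_{j=4}^{5}\binom{5}{j}p^j(1-p)^{5-j}=5p^4(1-p)+p^5$. By round-independence, the event "no consensus in rounds $1,\dots,r$" is an intersection of $r$ independent events, each of probability $1-P_1$. Hence $P(\text{no consensus in } r)=(1-P_1)^r$, and taking the complement gives $P(\text{consensus in } r)=1-(1-P_1)^r$. Equivalently, the first consensus round is geometric with parameter $P_1$.

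It then remains to check the numerical instances.
\begin{itemize}
\item At $p=0.7$: $5(0.7)^4(0.3)=0.36015$ and $(0.7)^5=0.16807$, so $P_1\approx 0.52822$. Then $(0.47178)^3\approx 0.1050$, which gives $0.895$.
\item At $p=0.85$: $5(0.85)^4(0.15)\approx 0.39150$ and $(0.85)^5\approx 0.44371$, so $P_1\approx 0.83521$. Then $(0.16479)^3\approx 0.00448$, which gives $\approx 0.996$.
\end{itemize}
I would also note that $P_1$ is increasing in $p$, since a binomial upper tail is monotone in the success probability. Consequently, over the whole regime $p\in[0.5,0.9]$ the bound is smallest at $p=0.5$, where $P_1=6/32$. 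This makes the "explicit probability bounds" uniform over the stated range, not just at the two illustrative points.

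The main obstacle is not the calculation but justifying the independence assumptions. LLM agents sharing training data are positively correlated, and this is exactly the "synthetic diversity" concern raised in the Introduction. Moreover, later rounds depend on earlier critiques, so rounds are not i.i.d. My first attempt would be to state the theorem strictly conditionally on independence. I would then remark that positive within-round correlation changes $P_1$ in a direction that depends on $p$, so no bound can be claimed without modeling it. For across-round dependence, I would point out that if the per-round conditional success probability is at least $P_1$ given any history, the same argument gives $1-(1-P_1)^r$ as a lower bound. That weaker, supermartingale-style hypothesis is the version I would actually try to defend.
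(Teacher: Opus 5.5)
Your proposal is correct and follows essentially the same route as the paper's sketch in Appendix~B. The per-round success probability is the binomial upper tail $P_1 = 5p^4(1-p)+p^5$, independence across rounds gives $1-(1-P_1)^r$, and both numerical instances check out; your unrounded $0.47178$ and $0.16479$ are slightly more careful than the paper's $(0.165)^3$. Your additions go beyond the paper, which simply assumes independence and defers that issue to its Limitations section: monotonicity in $p$ gives the uniform worst case $P_1 = 6/32$ at $p=0.5$, and you weaken the assumption to requiring that the conditional per-round success probability given any history be at least $P_1$.
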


\noindent See Limitations (Section~9) for the independence assumption this bound relies on.

\begin{theorem}[Diminishing Returns Termination]
This is also a conditional/parametric result: \emph{if} information gain follows geometric decay $E[\Delta L^{(r)}] = \Delta L^{(1)} \cdot \gamma^{r-1}$, \emph{then} termination (three consecutive rounds with gain $< \epsilon$) occurs at $r^* = 1 + \log(\epsilon / \Delta L^{(1)}) / \log \gamma$. For illustrative values ($\Delta L^{(1)} = 10$, $\gamma = 0.4$, $\epsilon = 1$): $r^* \approx 3.5$, aligning with the hard circuit breaker. As with Proposition~10, $\gamma$ is a modeling assumption illustrated with an example value, not a quantity this paper estimates from any run.
\end{theorem}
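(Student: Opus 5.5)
The plan is a direct computation under the stated conditional hypothesis, followed by checking the illustrative numbers. I treat the geometric law $E[\Delta L^{(r)}] = \Delta L^{(1)}\gamma^{r-1}$ as the deterministic trajectory of expected gains, so the termination round is read off from expected values rather than from the random gains. With $0<\gamma<1$, the sequence $\Delta L^{(1)}\gamma^{r-1}$ is strictly decreasing in $r$. So the set of rounds with expected gain below $\epsilon$ is an up-set $\{r : r \geq r_0\}$.

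The first step finds the crossing point. I will solve $\Delta L^{(1)}\gamma^{r-1} = \epsilon$ for real $r$. Taking logarithms gives $(r-1)\log\gamma = \log(\epsilon/\Delta L^{(1)})$, hence $r^* = 1 + \log(\epsilon/\Delta L^{(1)})/\log\gamma$. Both logarithms are negative when $\epsilon < \Delta L^{(1)}$, so $r^*>1$ is well defined. If instead $\epsilon \geq \Delta L^{(1)}$, the gain is already below threshold at round one, and I will note this degenerate case separately.

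The second step reconciles this with the "three consecutive rounds" rule of the $\mathrm{DR}(L_d,\epsilon)$ condition. Because the expected gains are monotone, once one round falls below $\epsilon$ every later round does too. Hence the three-consecutive window is first completed exactly two rounds after $\lceil r^*\rceil$. I will state explicitly that the formula in the statement is the threshold-crossing round, i.e.\ the first round at which the window can start. The observed stopping round is therefore $\lceil r^*\rceil + 2$, and $r^*$ should be read as the onset of the diminishing-returns regime. This step is the main obstacle: the statement conflates the crossing point with the stopping time. I would resolve it by defining $r^*$ as the crossing point and remarking on the $+2$ offset, rather than forcing the formula to equal the stopping round.

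The final step substitutes the illustrative values $\Delta L^{(1)}=10$, $\gamma=0.4$, $\epsilon=1$. This gives $r^* = 1 + \log(0.1)/\log(0.4) \approx 1 + 2.513 = 3.51$. I will then compare this with the composite stopping condition in Proposition~10, whose $r_{max}$ hard circuit breaker triggers first whenever $r_{max}$ is near $3$–$4$. Finiteness of termination then follows regardless of $\gamma$, as already guaranteed by Proposition~10.
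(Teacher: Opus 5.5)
Your proposal is correct and follows the same route as the paper, which gives no separate proof sketch for this theorem; its formula is just the solution of $\Delta L^{(1)}\gamma^{r-1}=\epsilon$ that you compute, giving $r^*\approx 3.51$ for the illustrative values. Your flag about the three-consecutive-rounds rule is valid and goes beyond the paper: with these values the expected gains are $1.6$ at round $3$ and $0.64$ at round $4$, so the window completes at round $6$, and $r^*$ is only the onset of the diminishing-returns regime, not the stopping round the statement claims. In general the stopping round is $\lfloor r^*\rfloor+3$, which equals your $\lceil r^*\rceil+2$ except when $r^*$ is an integer.
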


\begin{figure}[t]
\centering
\includegraphics[width=0.75\textwidth]{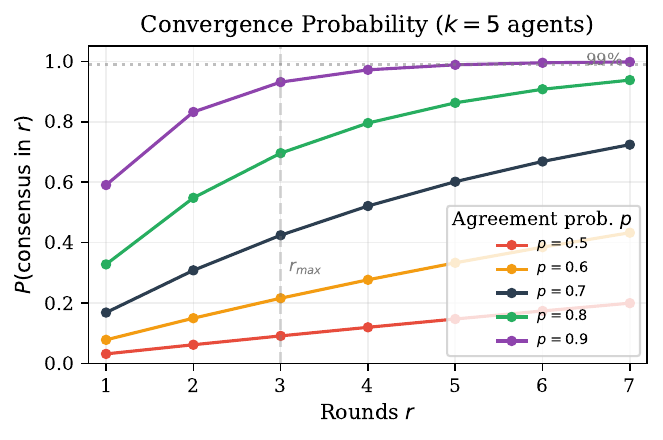}
\caption{Supermajority convergence probability as predicted by Theorem~12's formula, plotted across illustrative agreement probabilities $p$; not measured experimentally. At $p=0.7$: 89.5\% within 3 rounds; at the higher illustrative calibration point $p=0.85$: 99.6\%.}
\end{figure}

\begin{theorem}[Depth Bound]
For root goal complexity $\kappa(G_0)$ and uniform branching $n$:
\begin{equation}
d_{max} = \left\lceil \frac{\log \kappa(G_0)}{\log n} \right\rceil
\end{equation}
Beyond $d_{max}$, marginal value $\Delta V(d) \propto (1-q)^d$ decays exponentially while cost $O(n^d)$ grows, yielding a unique ROI-optimal depth $d^*$.
\end{theorem}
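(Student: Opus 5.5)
The argument has two parts: a counting argument for the formula for $d_{max}$, and a cost--benefit argument for the existence and uniqueness of $d^*$. Both rest on modelling assumptions the statement leaves implicit, so I will state them first. Take $\kappa(G_0)$ to be the number of atomic sub-goals, i.e.\ sub-goals a single Fractal Node can resolve directly without spawning children. Assume each Decision Maker $D_d$ splits its goal into $n$ children of complexity roughly $\kappa/n$, so that $\kappa(G_d) \le \kappa(G_0)/n^d$. Recursion stops once $\kappa(G_d) \le 1$.

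\textbf{Step 1 (depth formula).} Under the complexity-splitting assumption, a node at depth $d$ is atomic once $n^d \ge \kappa(G_0)$. The smallest such integer is $\lceil \log\kappa(G_0)/\log n\rceil$. A tree of that depth has $n^{d_{max}} \ge \kappa(G_0)$ leaves, which is enough to cover every atomic sub-goal. One level fewer gives $n^{d_{max}-1} < \kappa(G_0)$ leaves, which is not enough. This establishes the equation as a tight sufficient depth.

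\textbf{Step 2 (value and cost model).} For marginal value, assume each additional level resolves a fraction $q\in(0,1)$ of the residual value not yet captured. Then the value added at level $d$ is $\Delta V(d) = V_0 q(1-q)^d$. For marginal cost, use Corollary~8: the extra cost of level $d$ is $\Delta C(d) = n^d\,(C_{2P} + n\,T_{contract})$, and Theorem~6 gives that $C_{2P}$ is independent of $d$. This yields $\Delta C(d)=c_0 n^d$ with $c_0>0$.

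\textbf{Step 3 (unique optimum).} The net benefit of adding level $d$ is $\Delta V(d)-\Delta C(d)$. The ratio $\Delta V(d)/\Delta C(d) = (V_0q/c_0)\,\big((1-q)/n\big)^d$ is strictly decreasing in $d$, because $(1-q)/n<1$. So the marginal net benefit changes sign at most once, and the cumulative net value $\sum_{j\le d}(\Delta V(j)-\Delta C(j))$ is unimodal. Define $d^*$ as the largest $d$ with $\Delta V(d)\ge\Delta C(d)$. Explicitly, $d^* = \lfloor \log(V_0q/c_0)/\log(n/(1-q))\rfloor$, capped at $d_{max}$. This $d^*$ is the unique maximizer, up to a tie at equality, which can be resolved by a convention. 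Beyond $d_{max}$, the leaves are already atomic, so no further complexity reduction is possible. There, $\Delta V$ has only the decaying $(1-q)^d$ tail while $\Delta C$ grows as $n^d$, so every further level is net-negative.

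\textbf{Main obstacle.} The hard part is Step 1's hypothesis that decomposition divides complexity by $n$ exactly. LLM-generated contracts can overlap or leave gaps. I would first weaken the hypothesis to $\kappa(G_{d+1})\le \kappa(G_d)/n'$ with an effective branching $n'\le n$, which yields the bound with $\log n'$ in place of $\log n$. I would then state explicitly that the equation holds only under exact uniform splitting. Uniqueness of $d^*$ also needs care at ties. Declaring $d^*$ the smallest maximizer resolves them.
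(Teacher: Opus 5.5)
Your Step~1 is the paper's own argument. The Appendix~B sketch assumes $\kappa_d = \kappa(G_0)/n^d$, stops at $\kappa_d \le 1$, and solves $n^{d_{max}} \ge \kappa(G_0)$. Your observation that depth $d_{max}-1$ gives only $n^{d_{max}-1} < \kappa(G_0)$ leaves just makes minimality explicit.

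Where you go beyond the paper is the second sentence of the statement. The paper gives no argument at all for the existence or uniqueness of $d^*$. You make that claim provable in three moves. You fix a value model, with $q$ as the fraction of residual value captured per level. You take the per-level cost from Corollary~8. You read ``ROI-optimal'' as maximal cumulative net value; that is the sensible reading, because a cumulative value/cost ratio would always be maximized trivially at $d=0$. With those in place, your single-crossing argument on $\Delta V(d)/\Delta C(d) \propto ((1-q)/n)^d$ is sound.

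Three small repairs:
\begin{itemize}
\item Your claim that every level beyond $d_{max}$ is net-negative does not follow from exponential decay versus geometric growth, which only gives \emph{eventual} negativity. Under your own model, if $V_0 q/c_0 > (n/(1-q))^{d_{max}+1}$, then level $d_{max}+1$ is net-positive. Justify the cap at $d_{max}$ by atomicity instead: nothing is left to split, so that level adds no value or does not exist.
\item Pick one tie convention. ``Largest $d$ with $\Delta V(d) \ge \Delta C(d)$,'' which your floor formula computes, selects the largest maximizer, not the smallest as your closing remark says.
\item The contract cost added with level $d$ in Corollary~8 is $n^d T_{contract}$, not $n^{d+1} T_{contract}$. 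This only changes $c_0$.
\end{itemize}
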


We operationalize $\kappa(G_0)$ as the number of MECE (Mutually Exclusive, Collectively Exhaustive) sub-goals generated by the Decision Maker in a single decomposition pass at temperature 0. As an illustration (not a measured value --- see Appendix~A), the prediction-market case study's root goal decomposes into 3 strategic sub-goals, each with 2--3 tactical children ($\kappa \approx 9$, $d_{max} = 2$ at $n = 3$). This approximation is model-dependent but reproducible given fixed prompts.

\paragraph{Property~7, revisited: the cost of $B_{up} = 0$.} Property~7 states that upward communication is zero in normal operation. This is a real architectural limitation, not only a cost saving: it means no information learned lower in the recursion tree ever propagates back up except through the narrow, deliberately coarse CONTRACT\_BREACH channel. A child node that discovers a nuanced, non-breach-level insight about its sub-goal has no formal channel to inform the parent's broader strategy. We return to this in Limitations (Section~9).

\section{Bayesian Inter-Layer Escalation}

Standard binary escalation (normal vs.\ breach) is too coarse for nuanced inter-layer learning. We introduce a continuous concern accumulator with categorical emission.

\begin{definition}[Concern Accumulator]
A leaky integrator: $c(t+1) = (1-\lambda)c(t) + \lambda e(t)$ where $e(t) \in [0,1]$ is evidence against a parent assumption at time $t$. Emits categorical signals: NOMINAL ($c < \theta_1$), CONCERN ($\theta_1 \leq c < \theta_2$), BREACH ($c \geq \theta_2$).
\end{definition}

\begin{theorem}[Unification]
The parent updates a posterior belief over the validity of its assumption given child signals:
\begin{equation}
P(\text{valid} \mid \text{signals}) \propto \prod_j P(s_j \mid \text{valid}) \cdot P(\text{valid})
\end{equation}
The parent re-evaluates when $P(\text{valid}) < 1 - \eta_d$, unifying the hysteresis threshold $\eta_d$ with the escalation protocol under a single Bayesian posterior. This is a belief update over the parent's assumption --- it does not, by itself, specify or authorize any action that changes the Execution Contract (Definition~5); see Section~8 for what, if anything, happens next.
\end{theorem}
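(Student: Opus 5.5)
The plan is to treat the Unification theorem as two claims: a modeling claim and a reduction claim. The modeling claim is that the parent's belief about its assumption can be written as a Bayesian posterior driven by the categorical signals of the Concern Accumulator (Definition~15). The reduction claim is that the two existing re-evaluation rules, the hysteresis threshold $\eta_d$ and the binary CONTRACT\_BREACH trigger, are both special cases of a single posterior threshold test.

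First I would fix a generative model. The assumption carries a binary latent variable $V\in\{\text{valid},\neg\text{valid}\}$ with prior $P(\text{valid})$. Each child $j$ runs its own leaky integrator on evidence $e_j(t)$. Because of sibling isolation (Property~7), children share no messages. I would state explicitly the assumption that, conditional on $V$, the evidence streams of different children are independent. Under that assumption the emitted signals $s_j\in\{\text{NOMINAL},\text{CONCERN},\text{BREACH}\}$ are conditionally independent given $V$. Bayes' rule together with this factorization then gives the displayed equation directly. The normalizing constant is the sum of the same product over both values of $V$. This step is routine.

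Next I would introduce the likelihood ratios $\ell(s)=P(s\mid\text{valid})/P(s\mid\neg\text{valid})$. I would impose a monotonicity condition: $\ell(\text{NOMINAL})\ge 1 \ge \ell(\text{CONCERN}) > \ell(\text{BREACH})$. This is a monotone likelihood ratio property, and it holds whenever the integrator value $c$ is stochastically larger under $\neg\text{valid}$, since the signals are thresholdings of $c$ at $\theta_1$ and $\theta_2$. In log-odds form, the rule ``re-evaluate iff $P(\text{valid}\mid s)<1-\eta_d$'' becomes a linear test:
\[
\log\tfrac{P(\text{valid})}{P(\neg\text{valid})}+\sum_j\log\ell(s_j)<\log\tfrac{1-\eta_d}{\eta_d}.
\]
I would recover the two legacy mechanisms from this test.

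\textbf{Breach signaling.} Choose $\ell(\text{BREACH})$ small enough that a single BREACH drives the log-odds below threshold regardless of the other signals. Then a single CONTRACT\_BREACH alone triggers re-evaluation.

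\textbf{Hysteresis.} With no signals the posterior equals the prior. Re-evaluation then requires the accumulated CONCERN evidence, summed over children and repeated reports, to exceed a fixed margin, and that margin is set by $\eta_d$. This reproduces a dead-band: isolated CONCERNs do not flip the parent, but enough of them do.

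The main obstacle is the conditional-independence step. Children's signals are correlated through the shared parent contract and through shared model training data. Repeated emissions from one child over time are also correlated, because the leaky integrator gives successive values of $c(t)$ memory. I would handle this in two ways. First, let each child contribute only its current categorical state, not its whole history. Second, state conditional independence across siblings as an explicit assumption, paralleling the independence caveat attached to Theorem~12. Under that assumption, ``unification'' means that hysteresis and breach correspond to different parameter regimes of the single posterior threshold test, not that they are derived from first principles. The final sentence of the statement, that the update authorizes no contract change, holds by construction: the rule only outputs a re-evaluation decision.
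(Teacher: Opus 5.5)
The paper gives no proof of this theorem; Appendix~B sketches only Theorems~6, 12 and~14. The display is Bayes' rule with a naive-Bayes factorization assumed silently. The ``unification'' consists only of declaring the re-evaluation threshold to be $1-\eta_d$, a parameter the paper never defines elsewhere. Your derivation of the product form from an explicit conditional-independence assumption is correct, and it is exactly what the paper tacitly relies on. You are right not to derive that assumption from sibling isolation: the absence of lateral messages does not make the children's evidence conditionally independent given validity. The problems are in your reduction step, which goes beyond anything in the paper.

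\textbf{The monotonicity claim is false as stated.} You say $\ell(\text{NOMINAL})\ge 1\ge\ell(\text{CONCERN})>\ell(\text{BREACH})$ holds whenever $c$ is stochastically larger under $\neg\text{valid}$. Thresholding a dominated variable only gives $\ell(\text{NOMINAL})\ge 1\ge\ell(\text{BREACH})$ and leaves $\ell(\text{CONCERN})$ unconstrained. Take signal probabilities $(0.5,0.4,0.1)$ under valid and $(0.1,0.1,0.8)$ under $\neg\text{valid}$, for NOMINAL, CONCERN, BREACH. These satisfy dominance, even in the likelihood-ratio order, yet $\ell(\text{CONCERN})=4$. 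CONCERN signals would then \emph{raise} the posterior, and your hysteresis mechanism runs backwards. The ordering must be imposed as a separate condition on $\theta_1$ relative to the evidence model, with $\ell(\text{CONCERN})<1$ strictly, since at equality CONCERNs never accumulate.

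\textbf{Your two fixes conflict.} Your remedy for temporal correlation (one current categorical state per child) is inconsistent with the hysteresis argument's appeal to ``repeated reports.'' With one factor per child there are at most $n$ CONCERN factors (three in the paper's examples). So ``enough of them do'' is a parameter condition, namely $\log\frac{P(\text{valid})}{P(\neg\text{valid})}+n\log\ell(\text{CONCERN})<\log\frac{1-\eta_d}{\eta_d}$, not a consequence. If you instead allow repeated reports, conditional independence fails. The breach claim ``regardless of the other signals'' fails too, because unboundedly many NOMINAL factors with $\ell(\text{NOMINAL})>1$ outweigh any fixed $\ell(\text{BREACH})$. Commit to the bounded version, and state both the breach regime and the accumulation regime as explicit inequalities on the likelihood ratios.

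\textbf{What you recover is not hysteresis in the usual sense.} A single fixed threshold applied to a posterior built from current states is memoryless at the parent. You get a count threshold on CONCERNs, not distinct entry and exit thresholds or path dependence. The only memory in your model sits in the children's leaky integrators. This fits your closing caveat that ``unification'' is a reinterpretation of $\eta_d$ rather than a derivation, which is all the paper's statement supports.
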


This continuous-internal, categorical-external design appears throughout FRACTAL: mode switching (continuous saturation $\to$ discrete switch), consensus (continuous agreement $\to$ discrete termination), and depth allocation (continuous value decay $\to$ discrete stop/recurse decision).

\begin{figure}[t]
\centering
\includegraphics[width=0.75\textwidth]{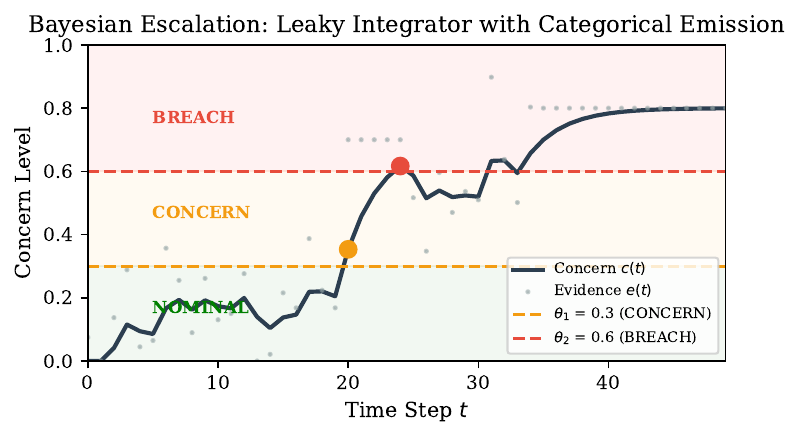}
\caption{An illustrative trajectory of the concern accumulator (Definition~15) under example evidence $e(t)$, not data from any run. The concern accumulator smooths noisy evidence into a continuous signal, crossing categorical thresholds for CONCERN and BREACH emission.}
\end{figure}

\section{Compatibility with Bounded Self-Improvement}

FRACTAL's architecture connects to the self-improving AI literature at three levels (Table~2). We state plainly, before describing these connections, what is and is not implemented: the Meta-Observer $M_d$ (Definition~1) is read-only, producing an audit trace $R_{audit}$ but never an action. Theorem~16 updates a posterior belief about a parent assumption's validity; it does not specify a mechanism by which that belief update becomes a modification to an Execution Contract's parameters ($\Theta, \tau_{exp}, G, T_{child}, P_{esc}, F_{cls}$). No component of the formal model as specified in Sections~3--7 performs such a modification. The connections below therefore describe what this architecture is \emph{compatible with extending to}, and what formal constraints such an extension would need to respect --- not a currently implemented self-modification capability.

\begin{table}[h]
\centering
\begin{tabular}{@{}lll@{}}
\toprule
Paradigm & System & FRACTAL analogue \\
\midrule
Inference-time refinement & Self-Refine \citep{madaan2023selfrefine} & Within-node convergent phase \\
Episodic memory & Reflexion \citep{shinn2023reflexion} & Knowledge Ledger persistence \\
Meta-agent search & ADAS \citep{hu2024adas} & Possible future direction$^*$ \\
Evolutionary generation & EvoAgent \citep{yuan2025evoagent} & Ensemble persona selection \\
Skill accumulation & Voyager \citep{wang2023voyager} & Ledger of validated strategies \\
Program search & FunSearch \citep{romeraparedes2024mathematical} & Falsification-guided refinement \\
Textual backpropagation & ANN \citep{ma2025selfevolving} & Possible future direction$^*$ \\
\bottomrule
\end{tabular}
\caption{Self-improvement paradigms and FRACTAL's architectural analogues (compatibility, not implementation). $^*$These two rows describe directions the architecture is compatible with extending toward, not a mechanism implemented in this paper; see the discussion below.}
\end{table}

\paragraph{Level 1: Within-Node (Reflexion-like).} The convergent phase critiques the divergent phase's proposals --- but unlike Self-Refine \citep{madaan2023selfrefine}, the critic and generator are architecturally separated (different personas, different information payloads), which is consistent with the broader finding that LLMs struggle with intrinsic self-correction absent external feedback \citep{huang2024cannot}.

\paragraph{Level 2: Cross-Layer (compatible with an ADAS-like extension).} A future extension could have parents observe child performance via concern signals and evolve contract specifications; no such mechanism exists in this paper's formal model. ADAS \citep{hu2024adas} argues code-based agent search spaces are Turing-complete; any such extension to FRACTAL would need to maintain the Self-Similarity Invariant --- modifying only parameters (personas, thresholds), never structure --- to preserve the composability Theorems~6, 12, and 14 rely on.

\paragraph{Level 3: System-Level.} Any future extension that modifies architectural parameters (diversity thresholds, dwell times) would, by this architecture's design intent, require Sovereign Override --- human approval --- before taking effect. Combined with the Self-Similarity Invariant and communication asymmetry (Property~7), these three constraints are the bounds any future bounded-self-improvement extension to FRACTAL would need to respect, addressing the recursive self-modification concern discussed broadly in the AI safety literature \citep{bostrom2014superintelligence}.

\begin{property}[Bounds on a possible self-improvement extension]
Any future contract-modification extension to this architecture would be constrained by: (1) it cannot modify termination conditions (Self-Similarity Invariant); (2) it cannot propagate changes upward without going through the escalation channel (communication asymmetry, Property~7); (3) it cannot alter system-level parameters without human approval (Sovereign Override). These are constraints on a possible extension, not a description of a capability implemented in this paper.
\end{property}

\section{Inductive Bias and Limitations}

\paragraph{Inductive bias.} FRACTAL's contract compression forces sub-goals to be independently specified, biasing outputs toward hierarchical, progressive-refinement structures. This is appropriate for nested-uncertainty domains (strategy, research, diagnosis) but potentially harmful for tasks requiring dense lateral coordination or flat parallel structure.

\paragraph{Single-LLM diversity.} When all personas share the same LLM, diversity arises from prompt conditioning and information asymmetry, not genuine cognitive difference. This provides structural value (mode separation, critique forcing) but not the information-theoretic diversity value of genuinely independent systems.

\paragraph{Formal guarantee scope.} Convergence bounds (Theorem~12) assume independent per-round agreement --- violated in practice because (a) agents share training data, (b) the Diversity Constraint creates anti-correlated biases by design, and (c) the shared Knowledge Ledger introduces round-to-round dependence. We report the bound at two calibration points ($p = 0.7$ and $p = 0.85$) as an illustrative model of convergence speed under idealized assumptions, not as a measured property of any implementation; we assume independence as a simplifying first-order model. A proof under correlated agents (using mixing-time arguments from Markov chain theory) is an important direction for future work.

\paragraph{The cost of zero upward communication.} As noted in Section~6, Property~7's $B_{up} = 0$ constraint means no information learned lower in the recursion tree propagates back up except through the coarse CONTRACT\_BREACH channel. This is a real architectural limitation: a child node's nuanced, sub-breach-level discoveries about its sub-goal have no formal channel back to the parent, and are lost once the child node terminates. Whether this cost is acceptable depends on the task --- strategy decomposition tasks where sub-goal insights rarely need to reshape the parent goal tolerate it better than tasks with tight cross-level feedback loops.

\paragraph{Diversity-threshold computability.} Definition~4's diversity constraint depends on a cognitive bias vector $\beta_i \in \mathbb{R}^{|B|}$ for each persona, but this paper does not specify how $\beta_i$ is computed from a persona's prompt or behavior in a concrete implementation. One candidate approach --- embedding each persona's system prompt with a named embedding model and computing cosine similarity on the resulting vectors --- is plausible but is our own untested design choice, not something we have implemented or validated here. We flag $\beta_i$'s computation as an open, unresolved point in the formal model rather than asserting a method we have not checked.

\paragraph{``Fractal'' and ``scalable'' as metaphor and theoretical property, not measured claims.} We use ``fractal'' as a metaphor for the node structure's recursive self-similarity (Definition~1, Remark~3), not a claim of measured fractal dimension or scale-invariant statistics. Likewise, any claim in this paper that the architecture is ``scalable'' refers to Theorem~14's depth bound and Corollary~8's cost bound holding theoretically as formal consequences of the self-similarity invariant --- not to demonstrated empirical scaling behavior at depth or ensemble sizes beyond what is illustrated in Appendix~A.

\paragraph{No empirical validation in this paper.} This paper presents the architecture and its formal communication, convergence, and depth-bound properties. Empirical validation of these properties --- and of the architecture's comparative performance against single-agent or flat-debate baselines --- is left to future work. The illustrative walkthrough (Sections~5--6) and the case study (Appendix~A) demonstrate that the formalism describes an executable protocol; they are not a quantitative evaluation and report no pass/fail rates, judge scores, or token measurements from any run.

\emph{A related empirical investigation of some of this architecture's components appears in concurrent work; forthcoming empirical validation of the properties characterized here is left to future work.}

\section{Ethics and Broader Impact}

This paper presents a formal multi-agent orchestration architecture and its communication, convergence, and depth-bound properties. No human-subjects data, personal data, or deployed system was used in this work; the illustrative walkthrough in Sections~1, 5, and 6 and the case study in Appendix~A are hypothetical, not records of real deployments. The architecture's compatibility with bounded self-improvement (Section~8) describes formal constraints on a possible future extension, not an implemented self-modifying system --- no autonomous modification of system behavior occurs in the model as specified here. We see no foreseeable harm specific to this paper's contribution beyond the general considerations applicable to any multi-agent LLM orchestration framework, including the general risk that formal bounds on a hypothetical future self-modification mechanism could be misread as already-implemented safety guarantees; we have tried to foreclose that misreading throughout Section~8 by stating plainly what is and is not implemented.

\section{Conclusion}

FRACTAL formalizes a self-similar recursive architecture for multi-agent orchestration: the same five-component node structure recurs at every depth, and communication, convergence, and depth bounds derived once hold at every layer by construction. The two-phase communication protocol reduces intra-node cost from quadratic to linear in ensemble size under a stated fixed-evaluator assumption; the two-mode orchestration criterion separates generative and evaluative phases via a measurable saturation threshold; the supermajority convergence guarantee and depth bound give explicit, checkable probability and complexity expressions; and the Bayesian escalation mechanism unifies hysteresis and contract-breach signaling under one posterior update. We have illustrated the architecture's execution on a qualitative walkthrough and a multi-layer case study, and have been explicit throughout about what is and is not proven, what is and is not implemented (particularly regarding self-improvement, Section~8), and what empirical questions --- comparative performance against single-agent or flat-debate baselines, chief among them --- this paper leaves open for future work.

\bibliographystyle{plainnat}
\bibliography{arxiv_references}

\appendix

\section{Multi-Layer Case Study: Prediction Market System}

We applied FRACTAL (depth=2, $n=3$) to designing a prediction market mispricing detection system --- a task requiring domain narrowing (Layer~0), signal detection (Layer~1A), mathematical formulation (Layer~1B), and data infrastructure (Layer~1C). This case study, like the running example in Sections~5--6, is an illustrative walkthrough of protocol execution, not a quantitative evaluation; it reports no pass/fail rates, judge scores, or token measurements.

\paragraph{Layer 0 (4 personas).} The Frame Challenger proposed ``market making'' as an alternative to ``mispricing detection'' --- rejected as standalone but absorbed as a hybrid element (providing liquidity to validate edge estimates). This illustrates the Frame-Challenger-as-regularizer mechanism: the rejected frame contributes a component to the accepted architecture.

\paragraph{Layer 1B (5 personas, $d=1$).} The Convergent phase discovered that critiques naturally clustered into three orthogonal groups (data freshness, model reliability, execution latency) --- revealing a progressive validation architecture invisible during proposal generation. This demonstrates stress-test-induced decomposition: the evaluation phase discovers problem structure that the generative phase does not surface on its own.

\paragraph{Inter-layer dynamics.} Child node 1B raised a CONCERN signal (leaky integrator crossing $\theta_1$) when it discovered that a parent assumption (Reverse Line Movement as primary signal) was invalid for the proposed infrastructure's 15-minute update frequency. The parent's Knowledge Ledger was updated at its next epoch; the child reprioritized within its own scope. This illustrates the escalation mechanism of Section~7 functioning as designed within this walkthrough: bottom-up signaling through the single CONCERN channel, without violating the temporal clock separation between layers.

This case study also grounds Theorem~14's worked $\kappa(G_0)$ example in Section~6: the root goal here decomposes into 3 strategic sub-goals (Layer~0), each with 2--3 tactical children at Layer~1 ($\kappa \approx 9$, $d_{max}=2$ at $n=3$ under the operationalization given in Section~6) --- illustrative of the operationalized complexity measure, not a value measured across any benchmark.

\section{Proof Sketches}

\paragraph{Theorem 6 (Communication Bound).} Across $r$ divergent rounds, each of $k$ agents generates one proposal per round at $T_p$ tokens: $r \cdot k \cdot T_p$ total. After the mode switch, the $m=2$ fixed evaluators each produce $c$ critiques per proposal at $T_s$ tokens: $m \cdot c \cdot T_s$ total. One synthesis step costs $T_{synth}$. Total: $rkT_p + mcT_s + T_{synth}$ --- linear in both $k$ and $r$. Free-form debate: each of $\binom{k}{2}$ pairs exchanges $r$ messages at $T_{resp}$ each, plus $k \cdot r$ turns at $T_{turn}$ --- quadratic in $k$, as shown in Equation~2.

\paragraph{Theorem 12 (Convergence).} Supermajority for $k=5$ requires $\geq 4$ of 5 agents agreeing: $P_1 = \binom{5}{4}p^4(1-p) + p^5 = 5p^4(1-p) + p^5$. At $p=0.7$: $P_1 = 5(0.7)^4(0.3) + (0.7)^5 = 5(0.2401)(0.3) + 0.16807 = 0.36015 + 0.16807 = 0.52822 \approx 0.528$. Over 3 rounds: $1-(1-0.528)^3 = 1-(0.472)^3 = 1-0.10504 = 0.895$. At $p=0.85$: $P_1 = 5(0.85)^4(0.15) + (0.85)^5 = 5(0.52200625)(0.15) + 0.4437053125 = 0.39151 + 0.44371 = 0.83521 \approx 0.835$. Over 3 rounds: $1-(1-0.835)^3 = 1-(0.165)^3 = 1-0.00447 = 0.99553 \approx 0.996$. We report $p=0.7$ and $p=0.85$ as two illustrative calibration points spanning the moderate-contentiousness regime stated in Theorem~12; neither is derived from a measurement of any implementation (see Limitations, Section~9).

\paragraph{Theorem 14 (Depth Bound).} At each layer, goal complexity divides by branching factor $n$: $\kappa_d = \kappa(G_0)/n^d$. Recursion bottoms out when $\kappa_d \leq 1$ (atomic action): $n^{d_{max}} \geq \kappa(G_0)$, giving $d_{max} = \lceil \log_n \kappa(G_0) \rceil$.

\end{document}